\numberwithin{equation}{section}
\theoremstyle{plain}
\newtheorem{theorem}{Theorem}[section]
\newtheorem{lemma}[theorem]{Lemma}
\newtheorem{assumption}{Assumption}
\theoremstyle{definition}
\newtheorem{definition}[theorem]{Definition}
\theoremstyle{remark}
\newtheorem{remark}[theorem]{Remark}
\def\thetable{\thesection.\@arabic\c@table}
\def\fps@table{h, t}
\providecommand{\abs}[1]{\lvert#1\rvert}
\providecommand{\norm}[1]{\lVert#1\rVert}
\newcommand{\bfi}{\bfseries\itshape}
\newcommand{\vertiii}[1]{{\left\vert\kern-0.25ex\left\vert\kern-0.25ex\left\vert #1 
    \right\vert\kern-0.25ex\right\vert\kern-0.25ex\right\vert}}
\newsavebox{\savepar}
\newcommand\reallywidehat[1]{%
\savestack{\tmpbox}{\stretchto{%
  \scaleto{%
    \scalerel*[\widthof{\ensuremath{#1}}]{\kern-.6pt\bigwedge\kern-.6pt}%
    {\rule[-\textheight/2]{1ex}{\textheight}}%
  }{\textheight}%
}{0.5ex}}%
\stackon[1pt]{#1}{\tmpbox}%
}
\newcommand{\Prob}{{\mathbb{P}}}
\newcommand{\Real}{{\mathbb{R}}}
\newcommand{\R}{\mathbb{R}}
\newcommand{\Z}{\mathbb{Z}}
\newcommand{\N}{\mathbb{N}}
\let\svthefootnote\thefootnote
\begin{document}
\title{\textbf{%
    Memory capacity of nonlinear recurrent networks: Is it informative?
}}
\author{Giovanni Ballarin$^{1}$, Lyudmila Grigoryeva$^{1,2}$, Juan-Pablo Ortega$^{3}$}
\date{}
\maketitle

\begin{abstract}
    The total memory capacity (MC) of linear recurrent neural networks (RNNs) has been proven to be equal to the rank of the corresponding Kalman controllability matrix, and it is almost surely maximal for connectivity and input weight matrices drawn from regular distributions. This fact questions the usefulness of this metric in distinguishing the performance of linear RNNs in the processing of stochastic signals. This work shows that the MC of random nonlinear RNNs yields arbitrary values within established upper and lower bounds depending exclusively on the scale of the input process. This confirms that the existing definition of MC in linear and nonlinear cases has no practical value.
\end{abstract}

\bigskip

\noindent \textbf{Keywords:} memory capacity, nonlinear recurrent neural networks, reservoir computing, echo state networks.  \\

\makeatletter
\addtocounter{footnote}{1} \footnotetext{ 
Mathematics and Statistics Division, University of St. Gallen, Rosenbergstrasse 22, CH-9000 St.~Gallen, Switzerland. {\texttt{Giovanni.Ballarin@unisg.ch} }, {\texttt{Lyudmila.Grigoryeva@unisg.ch} }}
\addtocounter{footnote}{1} \footnotetext{%
Honorary Associate Professor, Department of Statistics, University of Warwick, Coventry CV4 7AL, UK. {\texttt{Lyudmila.Grigoryeva@warwick.ac.uk} }} 
\addtocounter{footnote}{1} \footnotetext{%
Division of Mathematical Sciences, School of Physical and Mathematical Sciences,
Nanyang Technological University,
21 Nanyang Link,
Singapore 637371.
{\texttt{Juan-Pablo.Ortega@ntu.edu.sg}}}
\let\thefootnote\relax\footnote{%
	~\\%
	Accepted version for the 
	7th International Conference on
	Geometric Science of Information, GSI 2025, St.~Malo, France.
}
\makeatother

\setcounter{footnote}{0}\let\thefootnote\svthefootnote

\section{The problem}
\label{section:introduction}
\vspace{-0.1cm}
\paragraph{Main goal.} The main focus of this work is the notion of total memory capacity ((T)MC), which has been used for decades for measuring the ability of recurrent neural networks (RNNs) to reproduce past inputs. The results in~\cite{RC23} for linear (random) RNNs demonstrate the practical drawbacks of this metric when comparing different linear RNN architectures. This paper shows that these negative conclusions also hold for the MC of nonlinear RNNs. The MC is demonstrated to be not invariant with respect to the scale of the inputs, which can be tuned so that the corresponding MC equals arbitrary values within the bounds known in the literature. This fact has not been reported earlier, posing the question of how valuable some of the existing contributions based on the MC are, and signaling the need for other memory capacity notions that are inherently immune to the reported ambiguity.
\vspace{-0.1cm}
\paragraph{Context and motivation.}
Recurrent neural networks (RNNs) are a popular architecture class for modeling data with temporal dependence or sequential structure and have been widely studied~\cite{Elman1990,LSTM,Pascanu2013}. Different measures of \textit{\textbf{memory capacity}} (MC) for RNNs have been proposed and discussed in the literature~\cite{Haviv2019,Li2021}, which aim at assessing the memory properties of a certain model via measuring its capability to reconstruct past inputs from the model's current states.
\cite{Jaeger:2002} introduced a definition of memory capacity that was originally adapted to a specific class of RNN models with random internal weights, namely, the so-called echo state networks (ESNs)~\cite{Matthews1994,Jaeger04}, which are a popular family of RNNs in the reservoir computing (RC) literature. These RNN architectures can be seen as, in general, nonlinear state-space models with parameters of the state dynamics \textit{randomly sampled} from fixed regular probability distributions and the parameters of the observation equation subject to the training.

In this work, we consider the recurrent neural networks that admit the nonlinear state-space representation given by
\begin{align}
	\mathbf{x}_{t}&= F(\mathbf{x}_{t-1},z_t):=\boldsymbol{\varphi}(A \mathbf{x}_{t-1} + \boldsymbol{C} {z}_{t} + \bm{\xi}) , \label{eq:ESN_state} \\
	\textbf{y}_t & :=h (\mathbf{x}_{t})= W^\top \mathbf{x}_{t} , \label{eq:ESN_readout}
\end{align}
for $t \in \Z_-$, where $F:\mathbb{R}^N \times \mathbb{R}\rightarrow \mathbb{R}^N$ is called the state map, $h: \mathbb{R}^N \rightarrow \mathbb{R}^m$ is the readout map, $\mathbf{x}_t \in \R^N$ are the states, $ {z}_t \in \R$ are the inputs, $ \mathbf{y}_t \in \R^m$ are the outputs, and $m, N \in \N$. The map $\boldsymbol{\varphi} : \Real^N \to \Real^N$ is obtained by componentwise application of an activation function ${\varphi} : \Real \to \Real$. Whenever this map is the identity, \eqref{eq:ESN_state}-\eqref{eq:ESN_readout} correspond to the linear ESN model studied in~\cite{RC23}. In this paper, we assume ${\varphi}$ to be $\mathcal{C}^1$-continuous, bounded, and non-constant. 
The fixed {reservoir (connectivity) matrix} $A\in \mathbb{M}_{N}$, the {input mask} $\boldsymbol{C}\in \mathbb{R}^{N}$, and the {input shift} $\bm{\xi} \in \R^N$ are randomly sampled independently of inputs, while ${W}\in \mathbb{M}_{N,m}$ is estimated via minimizing the empirical risk associated to a chosen loss function.  Additionally, we impose that $A$ is normalized such that $\|A\|_2<1$, with $\|\cdot\|_2$ the spectral norm, which is a sufficient condition for the existence and uniqueness of the solutions of \eqref{eq:ESN_state}-\eqref{eq:ESN_readout} (the so-called Echo State Property (ESP)).

Over the last two decades, MC has received much attention, especially in the RC literature. The idea behind existing classical MC definitions is that the ability of the network states to contain information about previous independent and identically distributed (i.i.d.) inputs can be quantified by the correlation between the outputs of the network and its past inputs and underpins existing contributions for  linear~\cite{Hermans2010,dambre2012,esn2014,linearESN}, echo state shallow~\cite{White2004,farkas:bosak:2016,Verzelli2019a}, and deep architectures~\cite{gallicchioShorttermMemoryDeep2018} (see \cite{dambre2012,GHLO2014,RC3,RC4pv,marzen:capacity,RC15} for recent extensions for temporally dependent inputs). 

Let $(\Omega, \mathcal{A},\mathbb{P})$ be a probability space on which random variables are defined and consider a variance-stationary stochastic input $\mathbf{z}: \Omega \longrightarrow \mathbb{R}^{\mathbb{Z}_-}$ and the associated covariance-stationary output state process $\mathbf{x}: \Omega \longrightarrow\left(\mathbb{R}^N\right)^{Z-}$. Following~\cite{Jaeger:2002,RC15,RC23},  the {\bfi $\bm{\tau}$-lag memory capacity} of the state-space system with respect to $\mathbf{z}$, with $\tau \in \mathbb{N}$, is defined as
\begin{equation}\label{eq:MC_definition}
	\text{MC}_\tau := 
	1 - \frac{1}{\text{Var}({z}_t)} 
	\min_{W \in \mathbb{M}_{N, m}} 
	\mathbb{E} \left[ \,
	({{z}_{t-\tau} - W^\top \mathbf{x}_t})^2 \, 
	\right].
\end{equation}
Under the assumption that the joint process $(\mathbf{x}_t, z_{t-\tau})_{t\in \mathbb{Z}_-}$ is also covariance-stationary and  ${\Gamma}_\mathbf{x} := \text{Cov}(\mathbf{x}_t,\mathbf{x}_t)$ is nonsingular, $\textnormal{MC}_\tau$ has the following closed-form expression (see~\cite{RC15})
\begin{equation}\label{eq:MC_tau_covvar}
	\text{MC}_\tau =
	\frac{\text{Cov}({z}_{t-\tau}, \mathbf{x}_t) {\Gamma}_\mathbf{x}^{-1} \text{Cov}(\mathbf{x}_t, {z}_{t-\tau})}
	{\text{Var}({z}_t)} .
\end{equation}
The {\bfi total memory capacity} (MC) of an ESN is then given by
\begin{equation}\label{eq:MC_total_def}
	\text{MC} := \sum_{\tau = 0}^\infty \text{MC}_\tau
\end{equation}
and a fundamental result~\cite{dambre2012,RC15} is that for i.i.d. inputs and any linear or nonlinear ESN model it necessarily holds that 
\begin{equation}\label{eq:MC_bound}
	1 \leq \textnormal{MC} \leq N. 
\end{equation}
\vspace{-0.1cm}
\paragraph{Contribution.}
We complement the results of~\cite{RC23}, where the memory properties of \textit{linear} ESN models were characterized, and the invariance of MC under specific choices of the linear architecture was proved. In this work, we demonstrate that a direct implication of defining memory as in  \eqref{eq:MC_definition} and \eqref{eq:MC_total_def} is that the memory of a given \textit{nonlinear} ESN model can be tuned to \textit{attain both sides of the memory bounds} \eqref{eq:MC_bound} just by changing the scale of the input.  More specifically, we demonstrate that the change in the variance of the inputs may lead to different capacities. This implies that the MC is not only a function of the RNN architecture and is strictly \textit{task-specific}, its properties also rely on the nature of the input process, and hence, MC serves as a poor measure to discriminate between the signal processing properties of different RNN architectures. 
\vspace{-0.1cm}
\paragraph{Notation.} ${\mathbb{R}^+}$ denotes the set of all nonnegative real numbers.
For an $m$ by $n$ $\mathbb{R}$-valued matrix $A$ we write $\norm{A}_2$ to denote its spectral norm. For any  $m$ by $n$ $\mathbb{R}$-valued matrix $A$ over $\mathbb{R}$, we use the notation $\lfloor A \rfloor$ for its minimal absolute entry that is $
\lfloor A \rfloor := \min_{1\leq i\leq m,1\leq j\leq n} \abs{A_{i j} }
$. The infinity norm of a vector $\mathbf{x} \in \R^N$ is given by $\norm{\mathbf{x}}_\infty = \max_{1 \leq i \leq N} \abs{{x}_i}$.  Let $\mathbf{X}: \Omega \longrightarrow B$ be a random variable with $\left(B,\|\cdot\|_B\right)$ a normed space endowed with a $\sigma$-algebra (for example, its Borel $\sigma$-algebra). We write $\underset{\omega \in \Omega}{\operatorname{ess} \sup }\left\{\|\mathbf{X}(\omega)\|_B\right\}=\inf \left\{b \in {\mathbb{R}^{+}} \mid\|\mathbf{X}\|_B \leq b \enspace {\text{almost}} \enspace {\text{surely}} \right\}$ and $\underset{\omega \in \Omega}{\operatorname{ess} \inf }\left\{\|\mathbf{X}(\omega)\|_B\right\}=\sup \left\{b \in {\mathbb{R}^{+}} \mid\|\mathbf{X}\|_B \geq b \enspace {\text{almost}} \enspace {\text{surely}} \right\}$. 
A discrete-time stochastic process is a map of the type: 
\begin{equation*}
	\label{stochastic process 1}
	\begin{array}{cccc}
		{\bf z}: &\Bbb Z\times \Omega & \longrightarrow &\mathbb{R}^n\\
		&(t, \omega)&\longmapsto &{\bf z}_t(\omega),
	\end{array}
\end{equation*}
such that,  for each $t \in \Bbb Z $, the assignment ${\bf z} _t: \Omega \longrightarrow {\Bbb R}^n  $ is a random variable.
For each $\omega \in  \Omega  $, we will denote by ${\bf z} (\omega):=\{ {\bf z} _t(\omega ) \in \mathbb{R}^n\mid  t \in \Bbb Z  \}$  the {realization} or the {sample path} of the process ${\bf z}$.  We denote $\lceil \mathbf{z}\rceil_{L^{\infty}}:=\underset{\omega \in \Omega}{\operatorname{ess} \sup }\left\{\sup _{t \in \mathbb{Z}} \left\{ \|\mathbf{z}_t(\omega)\|_\infty\right\}\right\}$ and $\lfloor\mathbf{z}\rfloor_{L^{\infty}}:=\underset{\omega \in \Omega}{\operatorname{ess} \inf }\left\{\inf _{t \in \mathbb{Z}} \left\{ \min_{i\in{1,\ldots,n}} |(\mathbf{z}_t(\omega))_i|\right\}\right\}$.

\section{The result}
\label{section:nonlinear}
Our main result shows that the memory capacity of nonlinear ESNs is driven by the relationship between the input mask $\boldsymbol{C}$ and the input process $\{{z}_t\}_{t \in \mathbb{Z}_-}$. The properties of $\{{z}_t\}_{t \in \mathbb{Z}_-}$, and, in particular, its variance, play a key role in our demonstration. We consider the system \eqref{eq:ESN_state}-\eqref{eq:ESN_readout}, where, for simplicity, we set $\bm{\xi} = \mathbf{0}$ and assume a particular type of activation function $\varphi$, which conforms with most of the common choices of RNN architectures. Without loss of generality, we assume that $A$ and $\boldsymbol{C}$ are sampled from some regular distributions and kept fixed. Our goal is to show that for any sampled ESN, one can find appropriate input scalings that make its memory take any value between its theoretical maximum and minimum in \eqref{eq:MC_bound}.
\begin{definition}[\bf Sigmoid activation function]
	We say that an activation function $\varphi: \mathbb{R}\longrightarrow \mathbb{R}$ is a {\it sigmoid} function whenever there exists some $a\in \mathbb{R}$ such that  $|\varphi(x)|\leq a$ ($a$-bounded), for all $x\in \mathbb{R}$, $\varphi \in \mathcal{C}^1(\mathbb{R})$ with $\varphi'(x)\geq 0$ for all $x\in \mathbb{R}$.
\end{definition}
\begin{definition}[\bf Piecewise smooth sigmoid]
	\label{Piecewise smooth sigmoid}
	A sigmoid activation function $\varphi$ is said to be a $\delta$-\textit{linear $D$-saturating sigmoid}, with $\delta, D \in \mathbb{R}^+$, {$\delta< D$}, if the following conditions hold:
	\begin{itemize}
		\item[(i)] $\varphi(x) = x$ for all $x \in (-\delta, \delta)$;
		\item[(ii)] $\varphi(x) = -1$ for all $x \in (-\infty, -D) $ and $\varphi(x) = 1$ for all $x \in (D, +\infty)$.
	\end{itemize}
\end{definition}
This choice of activation function allows us to treat inputs of significantly small or large magnitudes. The conditions (i)-(ii) in Definition~\ref{Piecewise smooth sigmoid}, which specify the linear behavior of $\varphi$ in the $\delta$-neighborhood of $0$ and for inputs of absolute value above some $D$, also simplify the analysis of the situations which are typical for the numerical evaluation of activation functions. More precisely, we notice that due to the finite numerical precision of computers, the computation of $\varphi$ for close to $0$ and large values in the domain renders the behavior compatible with Definition~\ref{Piecewise smooth sigmoid}. 

\paragraph{Attaining the lower MC bound.}
We now prove that such networks, when applied to i.i.d. inputs with sufficiently high variance, attain minimal memory, namely the lower bound in \eqref{eq:MC_bound}.
Consider a process $\{{z}_t\}_{t \in \mathbb{Z}_-}$ with ${z}_t = \sigma \zeta_t$ where $\{\zeta_t\}_{t \in \mathbb{Z}_-}$ is a sequence of i.i.d. Rademacher random variables, which take values $-1$ and $1$ with equal probability, and $\sigma > 0$. We work with dense input mask vectors $\boldsymbol{C}$, which we formalize in the following assumption.
\begin{assumption}\label{assumption:dense_C}
	Define $\underline{c} := \lfloor \boldsymbol{C} \rfloor$ as the minimum absolute entry of input mask  $\boldsymbol{C}$ and let $\underline{c}>0$.
\end{assumption}

\begin{remark}
	In practice, ESN models are often constructed by drawing the entries of the reservoir matrix $A$ from \textit{sparse} distributions. The distribution of the entries of input mask $\boldsymbol{C}$ is rarely chosen to be sparse. Assumption~\ref{assumption:dense_C} immediately holds whenever the entries of $\boldsymbol{C}$ are sampled from any continuous distribution. More generally, $\boldsymbol{C}$ needs to be sampled from the law that has no atom at $0$ or, equivalently, $\min_{1\leq i \leq N}\Prob(C_i = 0) =0$, which is not a restrictive assumption and is satisfied in most practical scenarios. 
\end{remark}

We start by observing that for any $\boldsymbol{C}$ under Assumption~\ref{assumption:dense_C} one can require that $\sigma > {D}/{\underline{c}}$. It is easy to see that in this case
\begin{equation*}
	\lfloor \boldsymbol{C} \mathbf{z}\rfloor_{L^{\infty}}   
	\:=\lfloor \boldsymbol{C} \sigma\,\textnormal{sign}(\boldsymbol{\zeta}) \rfloor_{L^{\infty}}	\:= \: \lfloor \boldsymbol{C}  \rfloor \sigma
	\:\geq\:  \sigma \underline{c}\,  
	\:>\: D.
\end{equation*}
Additionally, one can use Definition~\ref{Piecewise smooth sigmoid} to show that for any $t\in \mathbb{Z}_-$
\begin{equation*}
	\boldsymbol{\varphi}(\boldsymbol{C} {z}_t)
	=
	\boldsymbol{\varphi}\big( \boldsymbol{C} \sigma\,\textnormal{sign}({\zeta}_t) \big) 
	=
	\begin{cases}
		\boldsymbol{\varphi}(- \boldsymbol{C}D/\underline{c})  & \textnormal{if} \quad {z}_t = -\sigma , \\
		\boldsymbol{\varphi}(+ \boldsymbol{C}D/\underline{c})  & \textnormal{if} \quad {z}_t = +\sigma,
	\end{cases} 
\end{equation*}
which leads to the following definition.
\begin{definition}[\bf Extreme states of the ESN]
	\label{Extreme states of ESN} Given an ESN constructed using the piecewise smooth sigmoid activation $\varphi$, the reservoir matrix $A$, and the input mask vector $\boldsymbol{C}$ that satisfies Assumption~\ref{assumption:dense_C}, let $\{{\zeta}_t\}_{t \in \mathbb{Z}_-}$ be a sequence of i.i.d.~random variables with unit variance. Given a scaling  $\sigma> {D}/{\underline{c}}$ and the associated inputs $\{{z}_t\}_{t \in \mathbb{Z}_-}$, ${z}_t=\sigma \zeta_t$, $t\in \mathbb{Z}_-$, for which
	\begin{equation*}
		\boldsymbol{\varphi}(\boldsymbol{C} {z}_t)
		=
		\begin{cases}
			\boldsymbol{\varphi}(- \boldsymbol{C}D/\underline{c})=:\mathbf{x}_{-}  & \textnormal{if} \quad {z}_t = -\sigma , \\
			\boldsymbol{\varphi}(+ \boldsymbol{C}D/\underline{c})=:\mathbf{x}_{+}  & \textnormal{if} \quad {z}_t = +\sigma,
		\end{cases} 
	\end{equation*}
	we call $\mathbf{x}_{+}, \mathbf{x}_{-} \in \{-1,+1\}^N$ \textit{extreme states} of the ESN associated to the input $\{{\zeta}_t\}_{t \in \mathbb{Z}_-}$. 
\end{definition} 
The following result shows that, whenever the ESN state dynamics are defined by~\eqref{eq:ESN_state}-\eqref{eq:ESN_readout}, one can always choose a scaling  $\sigma$ of the input such that the ESN states are all extreme.

\begin{lemma}\label{lemma:lower_bound}
	Let $\mathbf{z}=\{{z}_t\}_{t \in \mathbb{Z}_-}$, $z_t \in \{-1,+1\}$, be a sequence of i.i.d.~rescaled Rademacher random variable of standard deviation $\sigma$. Let $\mathbf{x}=\{\mathbf{x}_t\}_{t \in \mathbb{Z}_-}$, $\mathbf{x}_t\in \mathbb{R}^N$, be the states associated to inputs $\{{z}_t\}_{t \in \mathbb{Z}_-}$ of the ESN with $D$-saturating sigmoid and randomly sampled $A$ and $\boldsymbol{C}$. 
	If $\sigma > \dfrac{\sqrt{N}\|A\|_2 + D}{\underline{c}}$ with $\underline{c} = \lfloor{\boldsymbol{C}}\rfloor$, then for any $t\in \mathbb{Z}_-$
	\begin{equation*}
		\mathbf{x}_t 
		=
		\begin{cases}
			\mathbf{x}_{-}, & \textnormal{if} \quad {z}_t = -\sigma , \\
			\mathbf{x}_{+}, & \textnormal{if} \quad {z}_t = +\sigma.
		\end{cases}
	\end{equation*}
\end{lemma}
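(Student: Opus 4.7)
The plan is to show that under the hypothesized scaling of $\sigma$, the linear preactivation $A\mathbf{x}_{t-1} + \boldsymbol{C}z_t$ is, componentwise, so strongly dominated by $\boldsymbol{C}z_t$ that each coordinate lies outside the interval $[-D,D]$ and inherits the sign of the corresponding entry of $\boldsymbol{C}z_t$. Once that is established, applying $\boldsymbol{\varphi}$ componentwise immediately yields $\mathbf{x}_+$ or $\mathbf{x}_-$ exactly as in Definition~\ref{Extreme states of ESN}.

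First, since $\varphi$ is $D$-saturating in the sense of Definition~\ref{Piecewise smooth sigmoid}, we have $|\varphi(x)|\leq 1$ for all $x\in\mathbb{R}$ (by continuity of $\varphi$ at $\pm D$, using that $\varphi$ is $\mathcal{C}^1$ and equals $\pm 1$ outside $[-D,D]$). Consequently, any state produced by \eqref{eq:ESN_state} satisfies $\|\mathbf{x}_{t-1}\|_\infty\leq 1$ and hence $\|\mathbf{x}_{t-1}\|_2\leq\sqrt{N}$, giving the uniform bound
\begin{equation*}
    \|A\mathbf{x}_{t-1}\|_\infty \;\leq\; \|A\mathbf{x}_{t-1}\|_2 \;\leq\; \|A\|_2\,\|\mathbf{x}_{t-1}\|_2 \;\leq\; \sqrt{N}\,\|A\|_2.
\end{equation*}
This is the only place where the ESP/boundedness of $\varphi$ enters, and it uniformizes the estimate over all $t\in\mathbb{Z}_-$, so there is no need for an inductive argument over time.

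Next, I would look at each coordinate $i\in\{1,\dots,N\}$ of the preactivation. Since $z_t=\pm\sigma$ and $|C_i|\geq\underline{c}$, the driving term satisfies $|C_i z_t|\geq \underline{c}\sigma$. The hypothesis $\sigma > (\sqrt{N}\|A\|_2+D)/\underline{c}$ gives $\underline{c}\sigma - \sqrt{N}\|A\|_2 > D$. Combining with the previous bound, a reverse triangle inequality yields
\begin{equation*}
    \bigl|\,(A\mathbf{x}_{t-1})_i + C_i z_t\,\bigr| \;\geq\; |C_i z_t| - |(A\mathbf{x}_{t-1})_i| \;\geq\; \underline{c}\sigma - \sqrt{N}\|A\|_2 \;>\; D,
\end{equation*}
and the same comparison shows that the sign of $(A\mathbf{x}_{t-1})_i + C_i z_t$ agrees with the sign of $C_i z_t$, since $|C_i z_t|$ strictly exceeds $|(A\mathbf{x}_{t-1})_i|$.

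Finally, applying the piecewise definition of $\varphi$ coordinatewise gives $\varphi\bigl((A\mathbf{x}_{t-1})_i + C_i z_t\bigr) = \operatorname{sign}(C_i z_t)$. When $z_t=+\sigma$ this equals $\operatorname{sign}(C_i)=\varphi(C_i D/\underline{c})$ (using $|C_i|/\underline{c}\geq 1$ so that $|C_i D/\underline{c}|\geq D$), which is exactly the $i$th entry of $\mathbf{x}_+$; the case $z_t=-\sigma$ is symmetric and produces $\mathbf{x}_-$. I expect the only subtle point to be the boundary case where $|C_i|=\underline{c}$, making $C_i D/\underline{c}=\pm D$; this is handled by the continuity of $\varphi$ at $\pm D$, which forces $\varphi(\pm D)=\pm 1$. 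Everything else reduces to the elementary norm estimates above, so no single step is genuinely difficult — the only care needed is in bookkeeping the sign of $C_i z_t$ and the strict versus non-strict inequalities at the saturation threshold.
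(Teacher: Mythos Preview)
Your proof is correct and follows essentially the same approach as the paper: bound $\|A\mathbf{x}_{t-1}\|_\infty\leq\sqrt{N}\|A\|_2$ via $|\varphi|\leq 1$, use a reverse triangle inequality to push every preactivation coordinate past $D$ in absolute value, and then read off the saturated value. Your version is in fact more explicit than the paper's, which compresses the same estimate into the single line $\lfloor\boldsymbol{C}\mathbf{z}+A\mathbf{x}\rfloor_{L^\infty}\geq\underline{c}\sigma-\sqrt{N}\|A\|_2>D$ and leaves the sign-preservation and the boundary case $|C_i|=\underline{c}$ implicit; your careful tracking of $\operatorname{sign}(C_iz_t)$ and the continuity argument at $\pm D$ fill in exactly those gaps.
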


\begin{proof}
	By construction, it holds that
	\begin{equation*}
		\lfloor \boldsymbol{C} \mathbf{z} + A \mathbf{x}\rfloor_{L^{\infty}} 		\:\geq\: \lfloor\boldsymbol{C} \mathbf{z}\rfloor_{L^{\infty}}  - \lceil{ A \mathbf{x} }\rceil_{L^\infty}
		\:\geq\: \underline{c}\, \sigma - \sqrt{N}\|A\|_2 .
	\end{equation*}
	Hence, whenever $\sigma > ({\sqrt{N}\|A\|_2 + D)}/{\underline{c}}$, then $\lfloor \boldsymbol{C} \mathbf{z} + A \mathbf{x}\rfloor_{L^{\infty}} 		> D.
	$
	Thus, by Definition~\ref{Piecewise smooth sigmoid} when ${z}_t > 0$, $t\in \mathbb{Z}_-$,  it follows that $\boldsymbol{\varphi}(\boldsymbol{C} {z}_t + A \mathbf{x}_t) = \mathbf{x}_{+}$ for any possible state $\mathbf{x}_t$. The same applies to the case ${z}_t < 0$, $t\in \mathbb{Z}_-$,  with respect to $\mathbf{x}_{-}$. Therefore, whenever $\sigma > ({\sqrt{N}\|A\|_2 + D)}/{\underline{c}}$, the ESN state space consists of the set of extremal states almost surely. 
	\end{proof}
	\begin{remark}
Notice that although using a different bound such that $\lceil{ A \mathbf{x} }\rceil_{L^\infty}\leq \overline{a}N$ with $\overline{a}=\|A\|_\infty$ one obtains a less sharp larger lower bound for $\sigma$, it yields more stable empirical illustrations.
\end{remark}
\vspace{-0.1cm}
\paragraph{Attaining the upper MC bound.}
\label{Upper Bound Saturation}
We now show that the states of ESNs that are constructed using the i.i.d.~inputs with sufficiently small variance are equivalent to those of a linear recurrent network (or linear ESN), which yields provably maximal memory. 
We use again the i.i.d.~rescaled Rademacher sequence $\{{z}_t\}_{t \in \mathbb{Z}}$ of standard deviation $\sigma$ and notice that, for $\overline{c} := \norm{\boldsymbol{C}}_\infty$, it holds
\begin{equation*}
\lceil\boldsymbol{C} \mathbf{z}\rceil_{L^{\infty}} \leq \norm{\boldsymbol{C}}_\infty \lceil \mathbf{z}\rceil_{L^{\infty}} = \overline{c}\, \sigma.
\end{equation*}

\begin{lemma}\label{lemma:upper_bound}
Let $\{{z}_t\}_{t \in \mathbb{Z}}$ be a sequence of i.i.d. rescaled Rademacher random variables of standard deviation $\sigma$. Let $\{\mathbf{x}_t\}_{t \in \mathbb{Z}}$ be the states associated with inputs $\{{z}_t\}_{t \in \mathbb{Z}}$ of the nonlinear  ESN with $\delta$-linear sigmoid, $A$ and $\boldsymbol{C}$ drawn from random distributions such that $\|A\|_2<1$.
If $\sigma < \frac{\delta(1-\|A\|_2)}{\overline{c}}$, with $\overline{c}=\|\boldsymbol{C}\|_\infty$, then $\{\mathbf{x}_t\}_{t \in \mathbb{Z}}$ are equivalent to the states of the linear ESN model with the state dynamics defined as
\begin{equation*}
	{\mathbf{x}}_t := {A} {\mathbf{x}}_{t-1} + \boldsymbol{C} {z}_t .
\end{equation*}
\end{lemma}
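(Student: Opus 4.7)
The plan is to exhibit the candidate process $\tilde{\mathbf{x}}_t:=\sum_{k=0}^{\infty}A^{k}\boldsymbol{C}\,z_{t-k}$, which is the unique covariance-stationary solution of the purely linear recursion $\tilde{\mathbf{x}}_t=A\tilde{\mathbf{x}}_{t-1}+\boldsymbol{C}z_t$ (convergent because $\|A\|_2<1$), and to verify that, under the hypothesis on $\sigma$, the pre-activation $A\tilde{\mathbf{x}}_{t-1}+\boldsymbol{C}z_t$ lies componentwise inside the linear window $(-\delta,\delta)$. On that window, Definition~\ref{Piecewise smooth sigmoid}(i) makes $\boldsymbol{\varphi}$ act as the identity, so $\tilde{\mathbf{x}}_t$ also satisfies the nonlinear state equation~\eqref{eq:ESN_state}; the ESP guaranteed by $\|A\|_2<1$ then forces the nonlinear ESN state to agree with $\tilde{\mathbf{x}}_t$ almost surely.

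Concretely, I would carry out the following steps: (i) establish $\ell^{\infty}$-convergence of the series using the contractivity $\|A\|_2<1$ together with the one-step bound $\lceil\boldsymbol{C}\mathbf{z}\rceil_{L^{\infty}}\leq\overline{c}\sigma$ recorded just before the lemma; (ii) sum a geometric series to obtain an estimate of the form
\begin{equation*}
\lceil\tilde{\mathbf{x}}\rceil_{L^{\infty}}\leq \frac{\overline{c}\,\sigma}{1-\|A\|_2};
\end{equation*}
(iii) apply the triangle inequality together with the hypothesis to conclude
\begin{equation*}
\lceil A\tilde{\mathbf{x}}_{t-1}+\boldsymbol{C}z_t\rceil_{L^{\infty}}
\leq \|A\|_2\cdot\frac{\overline{c}\sigma}{1-\|A\|_2}+\overline{c}\sigma=\frac{\overline{c}\sigma}{1-\|A\|_2}<\delta;
\end{equation*}
(iv) invoke Definition~\ref{Piecewise smooth sigmoid}(i) to deduce $\boldsymbol{\varphi}(A\tilde{\mathbf{x}}_{t-1}+\boldsymbol{C}z_t)=A\tilde{\mathbf{x}}_{t-1}+\boldsymbol{C}z_t=\tilde{\mathbf{x}}_t$, so that $\tilde{\mathbf{x}}_t$ satisfies the nonlinear recursion; and (v) conclude by the ESP that $\mathbf{x}_t=\tilde{\mathbf{x}}_t$ almost surely, which is exactly the asserted equivalence.

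The main obstacle I anticipate is step (ii): the contractivity is stated in the spectral norm $\|\cdot\|_2$, whereas the saturation condition defining the linear window of $\boldsymbol{\varphi}$ is expressed componentwise, so one must perform careful bookkeeping between $\|\cdot\|_2$ and $\|\cdot\|_\infty$. A naive application of $\|v\|_\infty\leq\|v\|_2$ combined with sub-multiplicativity of the spectral norm replaces $\overline{c}=\|\boldsymbol{C}\|_\infty$ by $\|\boldsymbol{C}\|_2$, producing a threshold on $\sigma$ that is strictly stronger than the one stated. Matching the bound quoted in the lemma exactly therefore likely requires either controlling the induced $\ell^{\infty}$ operator norm of $A$ (in the spirit of the remark after Lemma~\ref{lemma:lower_bound}, where a looser but more stable bound is accepted) or absorbing a dimension-dependent constant into the threshold on $\sigma$. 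Once this normalization issue is resolved, the remainder of the argument—identifying $\tilde{\mathbf{x}}_t$ as a fixed point of the nonlinear recursion and appealing to ESP uniqueness—is essentially a substitution.
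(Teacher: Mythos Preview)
Your approach is essentially identical to the paper's: represent the state via the convergent series $\sum_{j\ge 0} A^{j}\boldsymbol{C}\,z_{t-j}$, bound its $\ell^{\infty}$ norm by a geometric sum, and conclude that the pre-activation remains in the linear window $(-\delta,\delta)$ so that $\boldsymbol{\varphi}$ acts as the identity.

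Your concern about the $\sqrt{N}$ factor is well founded, and the paper does not avoid it. The paper's proof passes through $\|\cdot\|_2$ exactly as you anticipate, obtaining $\lceil\mathbf{x}\rceil_{L^{\infty}}\le \sqrt{N}\,\overline{c}\,\sigma/(1-\|A\|_2)$, and concludes under the stronger hypothesis $\sigma<\delta(1-\|A\|_2)/(\sqrt{N}\,\overline{c})$; this is precisely the constant $\underline{\sigma}_{A,C}$ that is later used in Theorem~\ref{theorem:MC_sharp}(ii). The threshold without $\sqrt{N}$ that appears in the lemma statement is addressed only in the remark following the proof, as an empirically adequate looser bound. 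So you should not expect to match the stated threshold sharply---the paper itself does not. One minor point: your step~(iii) is redundant, since $A\tilde{\mathbf{x}}_{t-1}+\boldsymbol{C}z_t=\tilde{\mathbf{x}}_t$ by construction, so the bound from step~(ii) already controls the pre-activation. Finally, your step~(v), invoking the ESP to identify the nonlinear state with the linear candidate, is cleaner than the paper's presentation, which simply writes $\mathbf{x}_t=\sum_j A^{j}\boldsymbol{C}\,z_{t-j}$ at the outset without explicitly justifying that equality.
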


\begin{proof}
We begin by bounding the maximal possible entry of the states of the ESN 
We write that
\begin{align*}
	\lceil \mathbf{x}\rceil_{L^{\infty}}
	&:=\underset{\omega \in \Omega}{\operatorname{ess} \sup }\left\{\sup _{t \in \mathbb{Z}_-}\left\{\left\|\mathbf{x}_t(\omega)\right\|_\infty\right\}\right\}
	=\underset{\omega \in \Omega}{\operatorname{ess} \sup }\left\{\sup _{t \in \mathbb{Z}_-}\left\{
	\left\| \sum_{j=0}^\infty A^j \boldsymbol{C} z_{t-j}(\omega)\right\|_\infty
	\right\}\right\} \\
	& \leq \underset{\omega \in \Omega}{\operatorname{ess} \sup }\left\{\sup _{t \in \mathbb{Z}_-}\left\{
	\sum_{j=0}^\infty \left\| A^j \boldsymbol{C} z_{t-j}(\omega)\right\|_\infty
	\right\}\right\} \\
	& \leq \underset{\omega \in \Omega}{\operatorname{ess} \sup }\left\{\sup _{t \in \mathbb{Z}_-}\left\{
	\sum_{j=0}^\infty \left\| A \right\|_2^j \cdot \left\|\boldsymbol{C} z_{t-j}(\omega)\right\|_2
	\right\}\right\} \\
	& \leq \underset{\omega \in \Omega}{\operatorname{ess} \sup }\left\{\sup _{t \in \mathbb{Z}_-}\left\{
	\sqrt{N} \sum_{j=0}^\infty \left\| A \right\|_2^j \cdot \left\|\boldsymbol{C} z_{t-j}(\omega)\right\|_\infty
	\right\}\right\} \\
	& \leq \underset{\omega \in \Omega}{\operatorname{ess} \sup }\left\{\sup _{t \in \mathbb{Z}_-}\left\{
	\sqrt{N} \sum_{j=0}^\infty \left\| A \right\|_2^j \cdot \overline{c}\sigma
	\right\}\right\} 
	\leq \dfrac{\sqrt{N}}{1-\|A\|_2} \overline{c}\sigma,
\end{align*}
where we used that $\|A\|_2<1$. Finally, notice that whenever $\sigma < \delta(1-\|A\|_2)/(\sqrt{N}\overline{c})$, then necessarily $\lceil \mathbf{x}\rceil_{L^{\infty}}<\delta$ which yields that for the $\delta$-linear sigmoid $\boldsymbol{\varphi}(A\mathbf{x}_t + \boldsymbol{C} z_t) = A\mathbf{x}_t + \boldsymbol{C} z_t$ for all $t\in \mathbb{Z}_-$ as required.
\end{proof}

\begin{remark}
Note that the proof of Lemma~\ref{lemma:upper_bound} can be generalized to work for any input sequence that is almost surely bounded. For the sake of simplicity and to be consistent with our proof of Lemma~\ref{lemma:lower_bound}, here we have limited ourselves to the case of rescaled Rademacher processes.
We also note that, in simulations, a looser bound, i.e. one where the $\sqrt{N}$ factor is omitted and thus $\sigma < \delta(1-\|A\|_2)/\overline{c}$, still yields robust empirical results.
\end{remark}
\vspace{-0.3cm}
\paragraph{Total Memory Saturation.}
\label{Total Memory Saturation}
We are now ready to state our main theoretical result regarding the total memory capacity of nonlinear ESN models.

\begin{theorem}
\label{theorem:MC_sharp} 
Let $\{{z}_t\}_{t \in \mathbb{Z}}$ be a sequence of i.i.d. rescaled Rademacher random variables of standard deviation $\sigma$. Let a ESN model be defined as in \eqref{eq:ESN_state}-\eqref{eq:ESN_readout}. Let ${\varphi}$ be a  $\delta$-linear $D$-saturating sigmoid activation, and let $A$ and $\boldsymbol{C}$ be drawn from random distributions. Assume that $\|A\|_2<1$ and that Assumption~\ref{assumption:dense_C} holds.  Let $\{\mathbf{x}_t\}_{t \in \mathbb{Z}}$ be the states associated with inputs $\{{z}_t\}_{t \in \mathbb{Z}}$ of the   ESN.	Then, there exist positive constants $\overline{\sigma}_{A,C}$ and $\underline{\sigma}_{A,C}$ such that
\begin{description}
	\item[(i)] If $\sigma > \overline{\sigma}_{A,C}$, then $\textnormal{MC} = 1$, almost surely.
	\item[(ii)] If $\sigma < \underline{\sigma}_{A,C}$, then $\textnormal{MC} = N$, almost surely.
\end{description}
\end{theorem}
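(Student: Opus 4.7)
My plan is to treat the two regimes separately by invoking Lemmas~\ref{lemma:lower_bound} and~\ref{lemma:upper_bound} and then reading off the memory capacity from its definition. The thresholds are dictated by those two lemmas: I will set $\overline{\sigma}_{A,C} := (\sqrt{N}\|A\|_2 + D)/\underline{c}$ and $\underline{\sigma}_{A,C} := \delta(1-\|A\|_2)/(\sqrt{N}\,\overline{c})$. In both regimes the state process collapses to a particularly simple form, which allows $\textnormal{MC}$ to be evaluated explicitly or reduced to the known linear case.

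For \textbf{part (i)}, Lemma~\ref{lemma:lower_bound} gives that, almost surely, $\mathbf{x}_t \in \{\mathbf{x}_-, \mathbf{x}_+\}$ for all $t\in \mathbb{Z}_-$. Under Assumption~\ref{assumption:dense_C}, every entry of $\boldsymbol{C} D/\underline{c}$ has absolute value at least $D$, so the $D$-saturating sigmoid forces $(\mathbf{x}_+)_i = \textnormal{sign}(C_i) = -(\mathbf{x}_-)_i$; hence $\mathbf{x}_+ = -\mathbf{x}_- \in \{-1,+1\}^N$ and $\mathbf{x}_t = (z_t/\sigma)\,\mathbf{x}_+$ is a deterministic linear function of $z_t$ alone. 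Choosing $W = (\sigma/N)\,\mathbf{x}_+$ (and using $\|\mathbf{x}_+\|^2 = N$) gives $W^\top \mathbf{x}_t = z_t$ with zero mean-square error, so $\textnormal{MC}_0 = 1$. For $\tau\geq 1$, since $\mathbf{x}_t$ is a measurable function of $z_t$ only and $z_{t-\tau}$ is independent of $z_t$, one has $\mathbb{E}[\mathbf{x}_t z_{t-\tau}] = 0$, and a direct expansion shows that $\mathbb{E}[(z_{t-\tau}-W^\top \mathbf{x}_t)^2] = \sigma^2 + W^\top \mathbb{E}[\mathbf{x}_t\mathbf{x}_t^\top] W$ is minimized with value $\sigma^2 = \textnormal{Var}(z_{t-\tau})$, hence $\textnormal{MC}_\tau = 0$. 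Summing in \eqref{eq:MC_total_def} yields $\textnormal{MC}=1$ almost surely.

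For \textbf{part (ii)}, Lemma~\ref{lemma:upper_bound} reduces the dynamics to $\mathbf{x}_t = A\mathbf{x}_{t-1}+\boldsymbol{C} z_t$, whose unique stationary solution is $\mathbf{x}_t = \sum_{j=0}^\infty A^j \boldsymbol{C}\, z_{t-j}$. The linear-ESN memory theory recalled in the introduction (\cite{RC23,dambre2012,RC15}) identifies $\textnormal{MC}$ with the rank of the Kalman controllability matrix $K = [\boldsymbol{C}, A\boldsymbol{C}, \ldots, A^{N-1}\boldsymbol{C}]$. Since $\det(K)$ is a polynomial in the entries of $(A,\boldsymbol{C})$ that does not vanish identically, and these entries are drawn from regular (absolutely continuous) laws, the zero set of $\det(K)$ has Lebesgue measure zero; therefore $\textnormal{rank}(K) = N$ and $\textnormal{MC} = N$ almost surely.

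The \textbf{main obstacle} is conceptual and lies in part (i): when $\sigma$ is large, the stationary state is supported on only two antipodal points, so $\Gamma_{\mathbf{x}}$ is rank-one and singular. This rules out the closed form \eqref{eq:MC_tau_covvar} and forces one to work directly from the variational definition \eqref{eq:MC_definition}. One must also verify that the covariance-stationarity hypothesis underlying that definition holds in this degenerate regime, which is immediate because each $(\mathbf{x}_t, z_{t-\tau})$ is a deterministic function of $(z_t, z_{t-\tau})$ and the inputs are i.i.d.
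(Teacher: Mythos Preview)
Your proposal is correct and follows essentially the same route as the paper: invoke Lemma~\ref{lemma:lower_bound} with $\overline{\sigma}_{A,C}=(\sqrt{N}\|A\|_2+D)/\underline{c}$ to collapse the state to the two extreme points and compute $\textnormal{MC}_0=1$, $\textnormal{MC}_\tau=0$ for $\tau\geq 1$ directly from the variational definition; then invoke Lemma~\ref{lemma:upper_bound} with $\underline{\sigma}_{A,C}=\delta(1-\|A\|_2)/(\sqrt{N}\,\overline{c})$ to reduce to the linear case and appeal to the known full-memory result. The only cosmetic differences are that the paper takes $W_0=\pm\sigma\boldsymbol{e}_1$ rather than your $W=(\sigma/N)\mathbf{x}_+$ for $\textnormal{MC}_0$, and simply cites~\cite{RC15} for part~(ii) instead of sketching the controllability-rank argument; your added remark that $\Gamma_{\mathbf{x}}$ is singular in regime~(i) (so one cannot use~\eqref{eq:MC_tau_covvar}) is a useful clarification the paper leaves implicit.
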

\begin{proof}
{\bf{(i)}}
We use Lemma~\ref{lemma:lower_bound} to show that $\textnormal{MC}_0 = 1$, while $\textnormal{MC}_\tau = 0$ for all $\tau \geq 1$ whenever 
$\sigma > \overline{\sigma}_{A,C}$ with $\overline{\sigma}_{A,C}:=\dfrac{\sqrt{N}\|A\|_2 + D}{\underline{c}}$, where $\underline{c}=\lfloor{\boldsymbol{C}}\rfloor$.
Without loss of generality, for $\tau = 0$ set $W_0 = \sigma \boldsymbol{e}_1$ if the first entry of $\mathbf{x}_{+}$ is $1$, and $W_0 = - \sigma \boldsymbol{e}_1$ if $(\mathbf{x}_{+})_1=-1$, with $\boldsymbol{e}_1\in \mathbb{R}^N$ the first canonical basis vector. By construction and Lemma~\ref{lemma:lower_bound}, $W_0^\top \mathbf{x}_{+} = \sigma$ and $W_0^\top \mathbf{x}_{-} = -\sigma$, and hence
\begin{equation*}
	\min_{W_0 \in \mathbb{R}^{N}} 
	\mathbb{E} 
	\left[ \big( 
	{z}_{t} - W_0^\top \mathbf{x}_t
	\big)^2 \right]
	= 0 .
\end{equation*}
When $\tau > 1$, Lemma~\ref{lemma:lower_bound} implies that ${z}_{t-\tau}$ and $\mathbf{x}_t = \boldsymbol{\varphi}(\boldsymbol{C} {z}_t)$ are independent random variables for any $t\in \mathbb{Z}_{-}$. Therefore, since $\mathbb{E}\big[ {z}_{t-\tau} \big] = 0$,
\begin{equation*}
	\min_{W_\tau \in \mathbb{R}^{N}} 
	\mathbb{E} 
	\left[ \big( 
	{z}_{t-\tau} - W_\tau^\top \mathbf{x}_t
	\big)^2 \right]
	=
	\mathbb{E}\big[ {z}_{t-\tau}^2 \big]
	+ \min_{W_\tau \in \mathbb{R}^{ N}} \mathbb{E}\big[ (W_\tau^\top \mathbf{x}_t)^2 \big]
	= 
	\sigma^2 ,
\end{equation*}
and hence $\textnormal{MC}_{\tau} = 0$ for all $\tau\geq 1$. This shows that $\textnormal{MC} = 1$ almost surely.

\noindent {\bf{(ii)}}
Let  now 
\begin{equation*}
	\underline{\sigma}_{A,C} = \frac{\delta(1-\|A\|_2)}{\sqrt{N}\overline{c}},
\end{equation*} 
where $\overline{c}=\|\boldsymbol{C}\|_\infty$.   
Then, by Lemma~\ref{lemma:upper_bound},
for $\sigma < \underline{\sigma}_{A,C}$ the nonlinear ESN with ${\varphi}$ a  $\delta$-linear $D$-saturating sigmoid activation is equivalent to a linear ESN model. As proven in~\cite{RC15}, linear ESNs have maximal memory almost surely, therefore $\textnormal{MC} = N$ almost surely.
\end{proof}

\section{Simulations}
To give an empirical illustration of Theorem~\ref{theorem:MC_sharp}, we provide in Figure~\ref{fig:compare} the results of a simulation aimed at computing the total memory capacity of different nonlinear ESN models with i.i.d. rescaled Rademacher inputs, dense input matrix and tanh activation function.\footnote{The simulation code is publicly available at \url{https://github.com/Learning-of-Dynamic-Processes/memorycapacity}.}
These clearly show that, while for some types of reservoir matrices it is indeed possible to achieve practically maximal memory, as the variance of the input process increases, all models eventually drop to the lower bound of $\text{MC} = 1$ (see~\cite{RC23} for an in-depth discussion of why, even in the linear ESN limit, it is generally not possible to obtain precise memory estimates by simulation due to intrinsic numerical issues).

\begin{figure}[t!]
\centering
$\varphi = \text{tanh}$\\[5pt]
\begin{subfigure}[b]{0.32\textwidth}
	\centering
	\includegraphics[width=\textwidth]{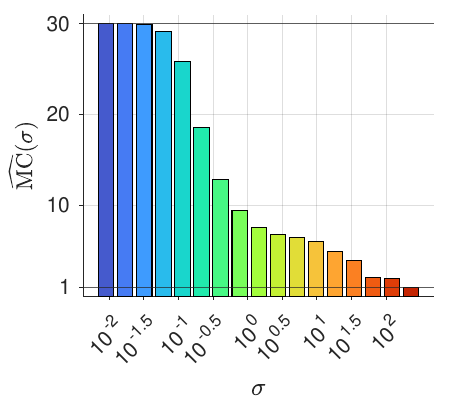}
	\caption{$A_{ij} \sim\ \mathcal{O}(\mathcal{N}(0,1))$}
	\label{}
\end{subfigure}
\hfill
\begin{subfigure}[b]{0.32\textwidth}
	\centering
	\includegraphics[width=\textwidth]{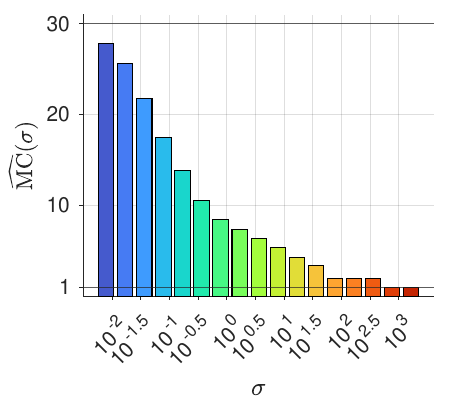}
	\caption{$A_{ij} \sim\ sp_C\mathcal{N}(0,1,0.1,0.7)$}
	\label{}
\end{subfigure}
\hfill
\begin{subfigure}[b]{0.32\textwidth}
	\centering
	\includegraphics[width=\textwidth]{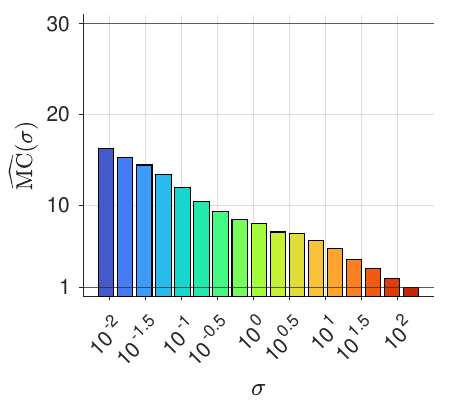}
	\caption{$A_{ij} \sim\ \mathcal{N}(0,1)$}
	\label{}
\end{subfigure}
\caption{Plot of estimated total memory capacity of nonlinear ESN models, $\widehat{\textnormal{MC}}({\sigma})$, as a function of scaled Rademacher standard deviation, $\sigma \in [\underline{\sigma}_{A,C}, \overline{\sigma}_{A,C}]$. The horizontal axis shows the value of $\sigma$ in logarithmic scale. Connectivity matrix $A = (A_{ij}) \in \mathbb{M}_N$ -- with $N = 30$ and $\|A\|_2 = 0.95$ -- sampled from (a) orthogonal ensemble (via Gram-Schmidt decomposition of entry-wise standard Gaussian matrix); (b) conditioning sparse Gaussian ensemble with sparsity degree $0.1$ and conditioning number $0.7$; (c) standard Gaussian ensemble. In all panels $c_i \sim\ \text{i.i.d.}\ \mathcal{N}(0,1)$. For each value of $\sigma$, we use $10^5$ Monte Carlo replications to estimate $\widehat{\textnormal{MC}}({\sigma})$.}
\label{fig:compare}
\end{figure}

Additionally, we empirically study the memory properties of nonlinear ESNs that are not explicitly within the scope of Theorem~\ref{theorem:MC_sharp}. We consider two alternative activation functions that cannot be well approximated by piecewise smooth sigmoid maps in Definition~\ref{Piecewise smooth sigmoid}.
First, we simulate ESN models with a ReLU activation, $\text{ReLU}(x) = \max(x, 0)$. Second, we apply a sigmoid-like map that is approximately linear around $0$ with logarithmic growth at $\pm \infty$, given by
\begin{equation*}
\text{LogSig}(x) := \text{sign}(x) \log(1 + \abs{x}) .
\end{equation*} 
Figure~\ref{fig:other} shows the results obtained with these two alternative activations within the same simulation design used for Figure~\ref{fig:compare}. One can immediately notice that a ReLU activation, which is piecewise-linear, significantly reduces total memory capacity at all values of $\sigma$. Moreover, the particular law to sample $A$ does not appear to impact these results.
On the other hand, we find that the LogSig activation, which does not saturate even for large inputs, yields improved memory performance for high-input variance. However, this comes at the cost of faster decay in memory in the pseudo-linear regime when $\sigma$ is small.
\vspace{-0.2cm}
\paragraph{Discussion.}
Our simulations suggest that, while activation functions described by Definition~\ref{Piecewise smooth sigmoid} have poor memory performance at high input variance and the ReLU map strongly caps total memory capacity over the entire variance range, LogSig activation may help mitigate memory issues. 
This, however, is not the case. In fact, it is easy to check that LogSig ESN states still become extreme for large $\sigma$ (we do not report these simulations due to space constraints; they can be reproduced using our code by setting e.g. $\sigma = 10^8$). 
Hence, our analysis of state saturation also holds more broadly for LogSig activation, although for a larger range of $\sigma$ values.
\begin{figure}[t!]
\centering
$\varphi = \text{ReLU}$\\[5pt]
\begin{subfigure}[b]{0.32\textwidth}
	\centering
	\includegraphics[width=\textwidth]{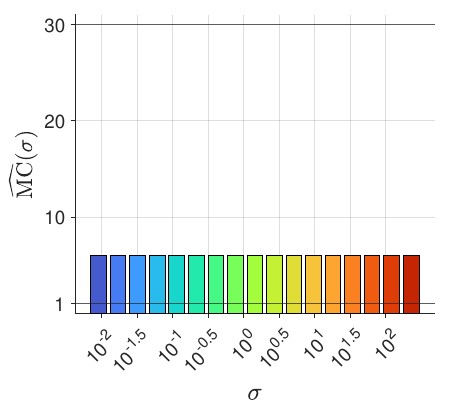}
	\caption{$A_{ij} \sim\ \mathcal{O}(\mathcal{N}(0,1))$}
	\label{}
\end{subfigure}
\hfill
\begin{subfigure}[b]{0.32\textwidth}
	\centering
	\includegraphics[width=\textwidth]{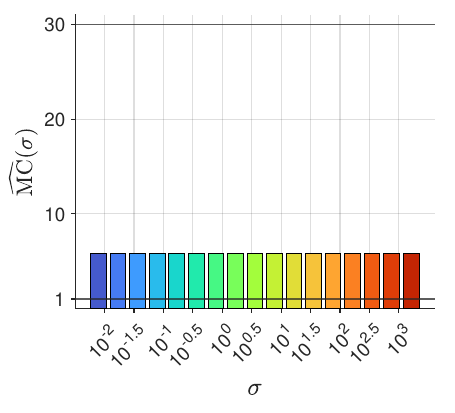}
	\caption{$A_{ij} \sim\ sp_C\mathcal{N}(0,1,0.1,0.7)$}
	\label{}
\end{subfigure}
\hfill
\begin{subfigure}[b]{0.32\textwidth}
	\centering
	\includegraphics[width=\textwidth]{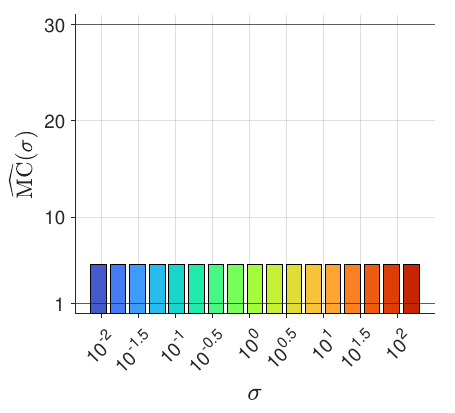}
	\caption{$A_{ij} \sim\ \mathcal{N}(0,1)$}
	\label{}
\end{subfigure}
\\
\rule{\textwidth}{0.2pt}
\\[5pt]
$\varphi = \text{LogSig}$\\[5pt]
\begin{subfigure}[b]{0.32\textwidth}
	\centering
	\includegraphics[width=\textwidth]{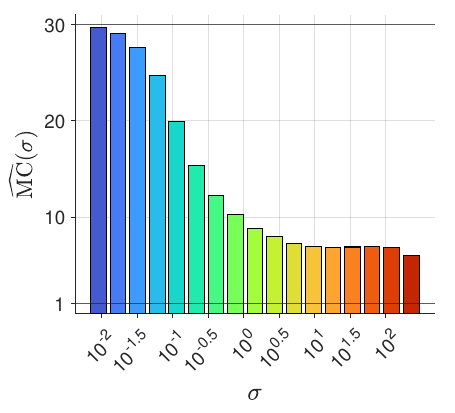}
	\caption{$A_{ij} \sim\ \mathcal{O}(\mathcal{N}(0,1))$}
	\label{}
\end{subfigure}
\hfill
\begin{subfigure}[b]{0.32\textwidth}
	\centering
	\includegraphics[width=\textwidth]{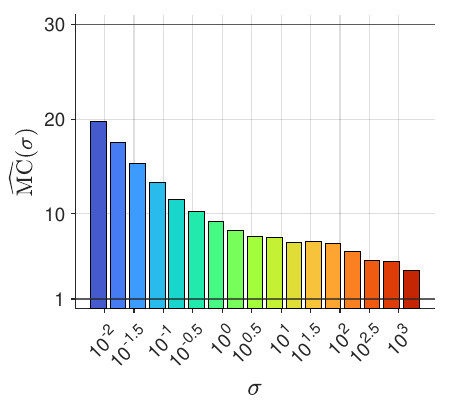}
	\caption{$A_{ij} \sim\ sp_C\mathcal{N}(0,1,0.1,0.7)$}
	\label{}
\end{subfigure}
\hfill
\begin{subfigure}[b]{0.32\textwidth}
	\centering
	\includegraphics[width=\textwidth]{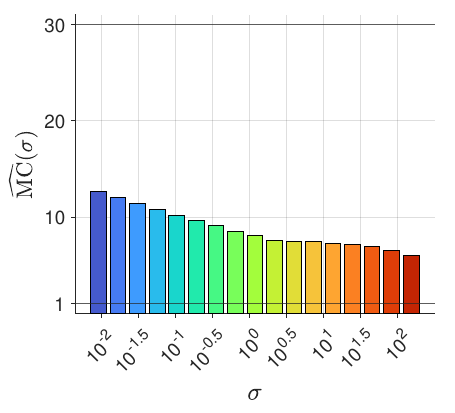}
	\caption{$A_{ij} \sim\ \mathcal{N}(0,1)$}
	\label{}
\end{subfigure}
\caption{Plot of estimated total memory capacity of nonlinear ESN models, $\widehat{\textnormal{MC}}({\sigma})$, as a function of scaled Rademacher standard deviation, $\sigma \in [\underline{\sigma}_{A,C}, \overline{\sigma}_{A,C}]$. The horizontal axis shows the value of $\sigma$ in logarithmic scale. The simulation design is identical to that used to produce Figure~\ref{fig:compare}.}
\label{fig:other}
\end{figure}

\section{Conclusion}
\label{section:conclusion}

In this work, we have shown that without making any precise assumptions on the properties of the inputs of a state space system other than their i.i.d. character, it is \textit{a priori} impossible to establish a nonlinear memory characterization that is more precise than just its boundedness within the range $1 \leq \textnormal{MC} \leq N $.  The definition of MC due to~\cite{Jaeger:2002}, which is often studied in the literature on reservoir computing methods, depends on the specific nature of the data. To be more precise, the variance of inputs was shown to directly control whether any fixed nonlinear ESN has maximal or minimal total memory capacity, based on whether it behaves approximately like a linear model or, rather, its state space consists only of two extremal states.

These results, taken together with those developed in~\cite{RC23} for linear recurrent networks, effectively spell out the limits of the MC concept.
Our conclusion is that memory capacity should not be considered a useful metric to evaluate the inherent ``qualities'' or ``capabilities'' of an ESN or RNN model, as it is merely a statistical quantity that may offer some insight into the application of a given model to a given input sequence.
\vspace{-0.1cm}
\paragraph{Potential geometric analysis extensions.} This work can have interesting extensions with the help of geometric tools. We emphasize two important cases for which such analysis could be especially appropriate: (i) taking into account geometric properties of the input-output system that needs to be modeled by ESNs, and (ii) geometric features of the network itself. In many applications, accurately capturing the intrinsic structure of data requires modeling it as elements of a non-trivial manifold, rather than just as a Euclidean space. The same applies to those cases when the states of the network exhibit similar behavior and are defined on manifolds. This perspective is crucial in domains like neuroimaging and computational anatomy~\cite{Vaillant2004}, natural language processing~\cite{nickel2017}, and robotics (where configuration spaces are often non-Euclidean~\cite{Park2018}), and others. By leveraging tools from differential geometry and topological data analysis, one can enhance representation learning in neural networks and propose a geometry-informed concept of memory capacity that avoids the limitations discussed in this paper. This direction has already been initiated in some earlier work (see, for example,~\cite{tino:symmetric}). Additionally, exploring the correlation between memory capacity and topological invariants (for example, Betti numbers) that persist across time delays could be an interesting direction. Investigating how the geometry of the input-output system or the input signal itself affects memory is also an avenue for our future research. More broadly, the extension of the circle of ideas in this paper to a non-Euclidean context is of much relevance in the design of structure-preserving machine learning algorithms that yield dynamical proxies of systems with underlying variational or geometric structures despite the presence of approximation and/or estimation error. Even though progress has been made in this direction using recurrent networks (see~\cite{RCSP1} and references therein), most work in this field has been carried out using kernel methods on manifolds (see~\cite{da2023gaussian,dacosta2024geometriclearningpositivelydecomposable,dacosta2023invariantkernelsriemanniansymmetric,RCSP2,RCSP3} and references therein).

\subsubsection*{Acknowledgements}
GB thanks the Center for Doctoral Studies in Economics and the University of Mannheim, where a significant part of this work was developed during the course of doctoral studies. GB also thanks the University of St.~Gallen for the hospitality over the years. LG thanks the hospitality of Nanyang Technological University, where part of this work was completed. JPO thanks the hospitality of the University of St.~Gallen, where part of this work was completed and acknowledges partial financial support from the School of Physical and Mathematical Sciences of the Nanyang Technological University, Singapore.

\noindent
\addcontentsline{toc}{section}{Bibliography}
\bibliographystyle{wmaainf}
\bibliography{RC23_bib}

\begin{thebibliography}{Damb~12}

\bibitem[Ball~24]{RC23}
G.~Ballarin, L.~Grigoryeva, and J.-P. Ortega.
\newblock ``{Memory of recurrent networks: Do we compute it right?}''.
\newblock {\it Journal of Machine Learning Research}, Vol.~25, No.~243,
  pp.~1--38, 2024.

\bibitem[Bara~14]{esn2014}
P.~Barancok and I.~Farkas.
\newblock ``{Memory capacity of input-driven echo state networks at the edge of
  chaos}''.
\newblock In: {\it Proceedings of the International Conference on Artificial
  Neural Networks (ICANN)}, pp.~41--48, 2014.

\bibitem[Cost~24]{dacosta2024geometriclearningpositivelydecomposable}
N.~D. Costa, C.~Mostajeran, J.-P. Ortega, and S.~Said.
\newblock ``{Geometric Learning with Positively Decomposable Kernels}''.
\newblock {\it Journal of Machine Learning Reasearch}, Vol.~25, No.~326,
  pp.~1--42, 2024.

\bibitem[Cost~25]{dacosta2023invariantkernelsriemanniansymmetric}
N.~D. Costa, C.~Mostajeran, J.-P. Ortega, and S.~Said.
\newblock ``{Invariant kernels on Riemannian symmetric spaces: a
  harmonic-analytic approach}''.
\newblock {\it SIAM Journal on Mathematics of Data Science}, Vol.~7, No.~2,
  pp.~752--776, 2025.

\bibitem[Coui~16]{linearESN}
R.~Couillet, G.~Wainrib, H.~Sevi, and H.~T. Ali.
\newblock ``{The asymptotic performance of linear echo state neural
  networks}''.
\newblock {\it Journal of Machine Learning Research}, Vol.~17, No.~178,
  pp.~1--35, 2016.

\bibitem[Da C~23]{da2023gaussian}
N.~{Da Costa}, C.~Mostajeran, and J.-P. Ortega.
\newblock ``{The Gaussian kernel on the circle and spaces that admit isometric
  embeddings of the circle}''.
\newblock In: {\it International Conference on Geometric Science of
  Information}, pp.~426--435, Springer, 2023.

\bibitem[Damb~12]{dambre2012}
J.~Dambre, D.~Verstraeten, B.~Schrauwen, and S.~Massar.
\newblock ``{Information processing capacity of dynamical systems}''.
\newblock {\it Scientific reports}, Vol.~2, No.~514, 2012.

\bibitem[Elma~90]{Elman1990}
J.~E. Elman.
\newblock ``{Finding structure in time}''.
\newblock {\it Cognitive Science}, Vol.~14, No.~2, pp.~179--211, 1990.

\bibitem[Fark~16]{farkas:bosak:2016}
I.~Farkas, R.~Bosak, and P.~Gergel.
\newblock ``{Computational analysis of memory capacity in echo state
  networks}''.
\newblock {\it Neural Networks}, Vol.~83, pp.~109--120, 2016.

\bibitem[Gall~18]{gallicchioShorttermMemoryDeep2018}
C.~Gallicchio.
\newblock ``{Short-term memory of Deep RNN}''.
\newblock 2018.

\bibitem[Gono~20]{RC15}
L.~Gonon, L.~Grigoryeva, and J.-P. Ortega.
\newblock ``{Memory and forecasting capacities of nonlinear recurrent
  networks}''.
\newblock {\it Physica D}, Vol.~414, No.~132721, pp.~1--13., 2020.

\bibitem[Grig~14]{GHLO2014}
L.~Grigoryeva, J.~Henriques, L.~Larger, and J.-P. Ortega.
\newblock ``{Stochastic time series forecasting using time-delay reservoir
  computers: performance and universality}''.
\newblock {\it Neural Networks}, Vol.~55, pp.~59--71, 2014.

\bibitem[Grig~16a]{RC3}
L.~Grigoryeva, J.~Henriques, L.~Larger, and J.-P. Ortega.
\newblock ``{Nonlinear memory capacity of parallel time-delay reservoir
  computers in the processing of multidimensional signals}''.
\newblock {\it Neural Computation}, Vol.~28, pp.~1411--1451, 2016.

\bibitem[Grig~16b]{RC4pv}
L.~Grigoryeva, J.~Henriques, and J.-P. Ortega.
\newblock ``{Reservoir computing: information processing of stationary
  signals}''.
\newblock In: {\it Proceedings of the 19th IEEE International Conference on
  Computational Science and Engineering}, pp.~496--503, 2016.

\bibitem[Havi~19]{Haviv2019}
D.~Haviv, A.~Rivkind, and O.~Barak.
\newblock ``{Understanding and controlling memory in recurrent neural
  networks}''.
\newblock In: {\it Proceedings of the 36th International Conference on Machine
  Learning}, pp.~2663--2671, 2019.

\bibitem[Herm~10]{Hermans2010}
M.~Hermans and B.~Schrauwen.
\newblock ``{Memory in linear recurrent neural networks in continuous time.}''.
\newblock {\it Neural Networks}, Vol.~23, No.~3, pp.~341--55, apr 2010.

\bibitem[Hoch~97]{LSTM}
S.~Hochreiter and J.~Schmidhuber.
\newblock ``{Long short-term memory}''.
\newblock {\it Neural Computation}, Vol.~9, No.~8, pp.~1735--1780, 1997.

\bibitem[Hu~25a]{RCSP3}
J.~Hu, J.-P. Ortega, and D.~Yin.
\newblock ``{A global structure-preserving kernel method for the learning of
  Poisson systems}''.
\newblock {\it Journal of Nonlinear Science}, Vol.~35, No.~79, 2025.

\bibitem[Hu~25b]{RCSP2}
J.~Hu, J.-P. Ortega, and D.~Yin.
\newblock ``{A structure-preserving kernel method for learning Hamiltonian
  systems}''.
\newblock {\it To appear in Mathematics of Computation}, 2025.

\bibitem[Jaeg~02]{Jaeger:2002}
H.~Jaeger.
\newblock ``{Short term memory in echo state networks}''.
\newblock {\it Fraunhofer Institute for Autonomous Intelligent Systems.
  Technical Report.}, Vol.~152, 2002.

\bibitem[Jaeg~04]{Jaeger04}
H.~Jaeger and H.~Haas.
\newblock ``{Harnessing nonlinearity: Predicting chaotic systems and saving
  energy in wireless communication}''.
\newblock {\it Science}, Vol.~304, No.~5667, pp.~78--80, 2004.

\bibitem[Li~21]{Li2021}
Z.~Li, J.~Han, W.~E, and Q.~Li.
\newblock ``{On the curse of memory in recurrent neural networks: Approximation
  and optimization analysis}''.
\newblock In: {\it ICLR}, pp.~1--43, 2021.

\bibitem[Marz~17]{marzen:capacity}
S.~Marzen.
\newblock ``{Difference between memory and prediction in linear recurrent
  networks}''.
\newblock {\it Physical Review E}, Vol.~96, No.~3, pp.~1--7, 2017.

\bibitem[Matt~94]{Matthews1994}
M.~Matthews and G.~Moschytz.
\newblock ``{The identification of nonlinear discrete-time fading-memory
  systems using neural network models}''.
\newblock {\it IEEE Transactions on Circuits and Systems II: Analog and Digital
  Signal Processing}, Vol.~41, No.~11, pp.~740--751, 1994.

\bibitem[Nick~17]{nickel2017}
M.~Nickel and D.~Kiela.
\newblock ``{Poincar{\'{e}} embeddings for learning hierarchical
  representations}''.
\newblock In: {\it Proceedings of the 31st International Conference on Neural
  Information Processing Systems}, pp.~6341--6350, Curran Associates Inc., Red
  Hook, NY, USA, 2017.

\bibitem[Orte~24]{RCSP1}
J.-P. Ortega and D.~Yin.
\newblock ``{Learnability of linear port-Hamiltonian systems}''.
\newblock {\it Journal of Machine Learning Research}, Vol.~25, pp.~1--56, 2024.

\bibitem[Park~18]{Park2018}
F.~C. Park, B.~Kim, C.~Jang, and J.~Hong.
\newblock ``{Geometric Algorithms for Robot Dynamics: A Tutorial Review}''.
\newblock {\it Applied Mechanics Reviews}, Vol.~70, No.~1, feb 2018.

\bibitem[Pasc~13]{Pascanu2013}
R.~Pascanu, C.~Gulcehre, K.~Cho, and Y.~Bengio.
\newblock ``{How to construct deep recurrent neural networks}''.
\newblock {\it arXiv}, dec 2013.

\bibitem[Tino~18]{tino:symmetric}
P.~Tino.
\newblock ``{Asymptotic Fisher memory of randomized linear symmetric echo state
  networks}''.
\newblock {\it Neurocomputing}, Vol.~298, pp.~4--8, 2018.

\bibitem[Vail~04]{Vaillant2004}
M.~Vaillant, M.~I. Miller, L.~Younes, and A.~Trouv{\'{e}}.
\newblock ``{Statistics on diffeomorphisms via tangent space
  representations.}''.
\newblock {\it NeuroImage}, Vol.~23 Suppl 1, pp.~S161--9, 2004.

\bibitem[Verz~19]{Verzelli2019a}
P.~Verzelli, C.~Alippi, and L.~Livi.
\newblock ``{Echo State Networks with self-normalizing activations on the
  hyper-sphere}''.
\newblock {\it Scientific Reports}, Vol.~9, No.~1, p.~13887, dec 2019.

\bibitem[Whit~04]{White2004}
O.~White, D.~Lee, and H.~Sompolinsky.
\newblock ``{Short-term memory in orthogonal neural networks}''.
\newblock {\it Physical Review Letters}, Vol.~92, No.~14, p.~148102, apr 2004.

\end{thebibliography}

\end{document}